\setlist{nolistsep}
\newtheorem{theorem}{Theorem}
\newtheorem{corollary}{Corollary}
\DeclareMathOperator*{\argmax}{\arg\!\max}
\let\emptyset\varnothing
\title{Multi-Object Tracking and Identification over Sets}
\author{Aijun Bai\\ UC Berkeley} 
\begin{document}
\maketitle

\begin{abstract}
\begin{quote}
The ability for an autonomous agent or robot to track and identify potentially multiple objects in a dynamic environment is essential for many applications, such as automated surveillance, traffic monitoring, human-robot interaction, etc. The main challenge is due to the noisy and incomplete perception including
inevitable false negative and false positive errors from a low-level detector. In this paper, we propose a novel multi-object tracking and identification over sets approach to address this challenge. We define joint states and observations both as finite sets, and develop motion and observation functions accordingly. The object identification problem is then formulated and solved by using expectation-maximization methods. The set formulation enables us to avoid directly performing observation-to-object association. We empirically confirm that the overall algorithm outperforms the state-of-the-art in a popular PETS dataset.
\end{quote}
\end{abstract}

\section{Introduction}
The ability to detect/recognize, track and identify multi-objects is essential in domains such as automated surveillance, traffic monitoring, human-robot interaction, etc. Provided with a low-level detector, the main challenge for multi-objects tracking and identification is to sequentially reason about the number of objects, and estimate the state of each object from ambiguous observations, in presence of noisy and incomplete perception including inevitable false and missing detections | false positives and false negatives respectively \cite{martin2010multiple}. Most multi-object tracking (MOT) approaches follow a \emph{tracking-by-detection} paradigm \cite{yilmaz2006object}, where an object detector runs on each frame to recognize all potential objects, and proposes a set of detections as input for a tracker, which estimates the true world state accordingly. Tracking-by-detection algorithms can be roughly classified into two groups: \emph{online} and \emph{offline}. Online tracking intends to recursively estimate the current state given past observations in a filtering way; offline tracking finds an optimal trajectory given the whole sequence of observations. In this paper, we mainly focus on online tracking problems.

In the context of online MOT, most existing approaches assume one or more hypotheses on observation-to-object data-associations, and perform Bayesian filtering on each object separately \cite{yilmaz2006object}. The global nearest neighbor (GNN) based filters find the best hypothesis that minimizes a cost function defined based on total distance or likelihood \cite{blackrnan1999design}. The joint probabilistic data-association (JPDA) based filters update each object by using all detections available weighted according to posterior association probabilities \cite{fortmann1983sonar}. The multiple hypothesis tracking (MHT) method attempts to maintain a set of hypotheses with high posterior probabilities in a tree structure \cite{reid1979algorithm}. Provided that these methods perform separate Bayesian updates by assuming specific data-associations, it is difficult for these methods to recover from wrong assumptions. Instead, we propose to avoid directly performing observation-to-object association, by using a joint state represented as a set to encode the number of objects, and the entire world state in terms of all objects. The filtering step then reasons about the joint state, as well as the data-associations in a Bayesian-optimal way.

The main contribution of this paper is the overall multi-object tracking and identification over sets (MOTIS) algorithm, together with associated techniques we introduce to make it possible, including: 1) the assignment and false-missing pruning strategies to approximate the observation function, 2) a data-association based particle refinement method, 3) a Bayesian density estimation approach to estimate motion and proposal weights, and 4) an expectation-maximization (EM) based object identification procedure to identify each individual object from particles. To compare with existing work, we evaluate MOTIS in standard PETS2009 benchmark data. The experimental results show that our approach outperforms the state-of-the-art in terms of overall tracking accuracy and ID-switch error.

\section{Related Work}
\label{sec:related-work}
Joint multi-object probability density (JMPD) \cite{kreucher2005multitarget} also applies a joint state formalization similar to our work. In each frame, they assume discretized pixel measurements (larger than object size) as the observation, and approximate the observation likelihood by counting the number of objects occupying each pixel. Instead, we assume a set of continuous detections as the observation, encode the joint state as a set, and approximate the observation function by considering all possible data-associations (with prunings). Sarkka et. al. \cite{sarkka2007rao} propose a Rao-Blackwellized particle filtering approach, under the assumption that there is a (very large) constant number of objects, while only an unknown, varying number of objects are visible. Their method encodes a possibility of data-associations in each particle, and separately updates each object within a particle using Kalman filters. Our method encodes all possible data-associations in a particle by using a set formulation, and implicitly reason about data-associations via observation likelihood in joint space. Random finite set (RFS) \cite{mahler1994random,vo2005sequential,vihola2007rao,maggio2008efficient} models MOT according to a specialized theory of finite set statistics (FISST) \cite{goodman1997mathematics}. From a mathematical point of view, central FISST concepts such as set integral and set derivative are beyond the scope of standard probability theory. Our method has some of the same advantages, but stays much simpler with only conventional probabilistic concepts. Bai et. al. \cite{bai2014intention} propose the idea of particle filtering over sets (PFS), particularly focusing on intention understanding in the domain of human-robot interaction. In this paper, we extend PFS to general MOT domains, formalize the identification problem, and present much more thorough technical details and experimental results.

\section{The Approach}
\label{sec:MOTIS-approach}
In this section, we present our main approach, namely multi-object tracking and identification over sets (MOTIS).

\subsection{A Set as a Random Variable}
Before introducing the entire approach, we first present our treatment of a set as a random variable, particularly the definition of the probability/density of observing a set. Please notice that, we may use probability and density interchangeably in this paper where there is no ambiguity.

\begin{theorem} \label{theorem:1}
Let random variable $S$ be a set of $n$ random variables $S = \{X_i\}_{i=1:n}$.
The joint probability of observing a set $S = \{x_i\}_{i=1:n}$, where $x_i$ are $n$ distinct values, is $\Pr(S) = \sum_{\sigma \in A_n} \Pr(X_1 = x_{\sigma(1)}, X_2 = x_{\sigma(2)}, \dots, X_n = x_{\sigma(n)})$, where $A_n$ is the set of all permutations of $\{i\}_{i=1:n}$, and $\Pr(X_1 = x_{\sigma(1)}, X_2 = x_{\sigma(2)}, \dots, X_n = x_{\sigma(n)})$ is the joint probability of observing $X_1 = x_{\sigma(1)} \wedge X_2 = x_{\sigma(2)} \wedge \dots \wedge X_n = x_{\sigma(n)}$.
\end{theorem}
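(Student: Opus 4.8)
The plan is to treat the set-valued random variable $S = \{X_i\}_{i=1:n}$ as the image of the ordered tuple $(X_1, \dots, X_n)$ under the map $q$ that forgets the ordering, i.e., $q(y_1, \dots, y_n) = \{y_1, \dots, y_n\}$. With this identification, the event ``$S = \{x_i\}_{i=1:n}$'' is by definition the event that $(X_1, \dots, X_n) \in q^{-1}(\{x_i\}_{i=1:n})$, so the whole statement reduces to computing this preimage and then applying additivity of the probability measure.

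First I would show that, because the $n$ values $x_i$ are distinct, the preimage $q^{-1}(\{x_i\}_{i=1:n})$ is exactly the finite collection of tuples $\{(x_{\sigma(1)}, \dots, x_{\sigma(n)}) : \sigma \in A_n\}$: any tuple whose underlying set is $\{x_i\}$ must be a rearrangement of the $x_i$, and distinctness guarantees this rearrangement is a genuine permutation (no collapses or repetitions). Hence $\{S = \{x_i\}\} = \bigcup_{\sigma \in A_n}\{X_1 = x_{\sigma(1)}, \dots, X_n = x_{\sigma(n)}\}$. Next I would check that this union is disjoint: if $\sigma \ne \tau$ then $\sigma(j) \ne \tau(j)$ for some $j$, and distinctness of the $x_i$ gives $x_{\sigma(j)} \ne x_{\tau(j)}$, so the events constrain $X_j$ to two different values and cannot co-occur. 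Finite additivity of $\Pr$ then yields $\Pr(S) = \sum_{\sigma \in A_n} \Pr(X_1 = x_{\sigma(1)}, \dots, X_n = x_{\sigma(n)})$, which is the claim. For the continuous (density) version the same decomposition holds at the level of events on infinitesimal neighborhoods of each permuted tuple; dividing through by the common volume element $\prod_i dx_i$ gives the corresponding identity for joint densities, and again distinctness is exactly what makes those neighborhoods disjoint.

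The main obstacle — really the only subtle point — is the measure-theoretic setup: one has to fix a $\sigma$-algebra on the space of $n$-element subsets (equivalently, push the product $\sigma$-algebra forward along $q$) so that $\Pr(S = \cdot)$ is well defined and the preimage manipulation is legitimate; once that is in place the argument is just disjointness plus additivity. A secondary caveat worth stating explicitly in the proof is that the formula genuinely needs the $x_i$ to be distinct: with repeated values the permutations no longer index disjoint events and $q^{-1}$ is strictly smaller than $A_n$, so the naive sum would overcount — which is precisely why the statement is phrased for $n$ distinct values.
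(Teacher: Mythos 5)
Your proof is correct and follows essentially the same idea as the paper's: the event of observing the set $\{x_i\}_{i=1:n}$ decomposes into the union over all permutations (assignments/bijections) of the corresponding ordered-tuple events, and the probabilities add. The paper's own proof is only an informal one-liner to this effect; your version properly supplies the two steps it leaves implicit, namely that distinctness of the $x_i$ makes the permutation-indexed events pairwise disjoint and that finite additivity then gives the sum, so no further comparison is needed.
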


\begin{proof}
When observing $S = \{x_i\}_{i=1:n}$, we do not know each value $x \in \{x_i\}_{i=1:n}$ comes from which random variable $X \in \{X_i\}_{i=1:n}$. The probability of observing $S$ counts all possibilities, which are all assignments from random variables to the observed values. An assignment $\psi$ is a bijection $\psi : \{X_i\}_{i=1:n} \to \{x_i\}_{i=1:n}$, corresponding to a permutation of elements of $S$.
\end{proof}

\begin{corollary} \label{corollary:1}
Let $\mathcal{O} = \{o_i\}_{i=1:n}$ be a set of $n$ distinct objects. Sampling without replacement for $k$ times, suppose the result is a set $S = \{o_{(i)}\}_{i=1:k}$. The joint probability of observing $S$ is $\Pr(S) = k! {1 \over {n (n-1) \cdots (n - k + 1)}} = {1 \over {n \choose k}}$, where ${n \choose k} = {n! \over k!(n-k)!}$ is the binomial coefficient.
\end{corollary}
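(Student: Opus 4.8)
The plan is to apply Theorem~\ref{theorem:1} directly, with the $k$ draws playing the role of the random variables $X_1,\dots,X_k$ and the observed set being $S=\{o_{(i)}\}_{i=1:k}$. Write $X_j$ for the object returned on the $j$-th draw. Since sampling is without replacement, the $k$ returned objects are pairwise distinct, so $S$ is indeed a set of $k$ distinct values and Theorem~\ref{theorem:1} applies, giving $\Pr(S)=\sum_{\sigma\in A_k}\Pr(X_1=o_{(\sigma(1))},\dots,X_k=o_{(\sigma(k))})$.

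The core computation is to evaluate a single term of this sum. For any fixed ordering $(a_1,\dots,a_k)$ of the $k$ distinct sampled objects, the chain rule gives $\Pr(X_1=a_1,\dots,X_k=a_k)=\prod_{j=1}^{k}\Pr(X_j=a_j\mid X_1=a_1,\dots,X_{j-1}=a_{j-1})$. Because we draw without replacement from $n$ distinct objects, the $j$-th conditional distribution is uniform over the $n-j+1$ objects not yet drawn, and since $a_j\notin\{a_1,\dots,a_{j-1}\}$ this conditional equals $\tfrac{1}{n-j+1}$. Hence each term equals $\tfrac{1}{n(n-1)\cdots(n-k+1)}=\tfrac{(n-k)!}{n!}$ and, crucially, this value is independent of the permutation $\sigma$: the joint law of the draws is exchangeable on tuples of distinct outcomes.

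Summing the $|A_k|=k!$ identical terms then yields $\Pr(S)=k!\cdot\tfrac{1}{n(n-1)\cdots(n-k+1)}=\tfrac{k!\,(n-k)!}{n!}=\tfrac{1}{\binom{n}{k}}$, which is the claim. The last equality is just the definition of the binomial coefficient.

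I expect the only genuinely substantive point to be the observation that every permutation contributes the \emph{same} probability; once the exchangeability of without-replacement sampling over distinct outcomes is stated explicitly, the remainder is routine bookkeeping with the falling factorial $n(n-1)\cdots(n-k+1)$. A minor companion point worth spelling out is that each $\sigma$ yields a tuple of genuinely distinct values, so the uniform-conditional formula applies uniformly to all $k!$ terms; this is immediate since $S$ has $k$ distinct elements by hypothesis.
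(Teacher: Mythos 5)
Your proof is correct and follows exactly the route the paper intends: apply Theorem~\ref{theorem:1}, observe that each of the $k!$ permutations of the distinct sampled objects has the same probability $\tfrac{1}{n(n-1)\cdots(n-k+1)}$ under without-replacement sampling, and sum. The paper states the corollary without an explicit proof, but its displayed formula $k!\cdot\tfrac{1}{n(n-1)\cdots(n-k+1)}$ is precisely the bookkeeping you carried out, so there is nothing to add.
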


\begin{corollary} \label{corollary:2}
Let $X$ be a random variable following a probability function $f_X(x)$, and $S = \{X_i\}_{i=1:n}$ be a set of random variables, whose elements are independent and identically distributed as $X$, the joint probability of observing a set $S = \{x_i\}_{i=1:n}$ is $\Pr(S) = n! \prod_{1 \le i \le n} f_X(x_i)$.
\end{corollary}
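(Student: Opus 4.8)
The plan is to apply Theorem~\ref{theorem:1} verbatim and then exploit the i.i.d.\ structure to collapse the permutation sum to a single term times $n!$. First I would invoke Theorem~\ref{theorem:1} with the given set of random variables $S = \{X_i\}_{i=1:n}$ and the observed distinct values $\{x_i\}_{i=1:n}$, which immediately yields $\Pr(S) = \sum_{\sigma \in A_n} \Pr(X_1 = x_{\sigma(1)}, X_2 = x_{\sigma(2)}, \dots, X_n = x_{\sigma(n)})$. Checking the hypothesis here is the only preliminary step: the $x_i$ are assumed to be $n$ distinct values, so the permutation count in Theorem~\ref{theorem:1} is exact and nothing further is needed.

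Next I would factor each summand. Mutual independence of the $X_i$ lets me write $\Pr(X_1 = x_{\sigma(1)}, \dots, X_n = x_{\sigma(n)}) = \prod_{1 \le i \le n} \Pr(X_i = x_{\sigma(i)})$, and the identical-distribution assumption replaces each marginal $\Pr(X_i = x_{\sigma(i)})$ by $f_X(x_{\sigma(i)})$. So every summand equals $\prod_{1 \le i \le n} f_X(x_{\sigma(i)})$. Because multiplication is commutative and $\sigma$ is a bijection of $\{1,\dots,n\}$, reindexing the product gives $\prod_{1 \le i \le n} f_X(x_{\sigma(i)}) = \prod_{1 \le i \le n} f_X(x_i)$, i.e.\ the summand does not depend on $\sigma$. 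Since $|A_n| = n!$, summing over $A_n$ produces $\Pr(S) = n! \prod_{1 \le i \le n} f_X(x_i)$, as claimed.

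There is no genuine obstacle in this argument — it is a direct specialization — so the ``hard part'' is really just rhetorical care: being explicit that independence (not merely identical distribution) is what licenses the factorization into marginals, and that it is the subsequent reindexing, justified by commutativity and bijectivity of $\sigma$, that makes all $n!$ terms equal. I would also note in passing that Corollary~\ref{corollary:1} is recovered as the special case in which $f_X$ is uniform on a finite ground set and sampling is without replacement, which provides a consistency check on the formula.
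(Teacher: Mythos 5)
Your argument is correct and is exactly the intended derivation: the paper states Corollary~\ref{corollary:2} without an explicit proof as a direct specialization of Theorem~\ref{theorem:1}, and your factorization-by-independence followed by reindexing the product over the bijection $\sigma$ is the right way to collapse the permutation sum into $n!$ identical terms. One small caveat on your closing aside: Corollary~\ref{corollary:1} is not a special case of Corollary~\ref{corollary:2}, because sampling without replacement makes the draws dependent, so the i.i.d.\ hypothesis fails; both corollaries follow separately from Theorem~\ref{theorem:1}.
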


Notice that, when we say $S = \{X_i\}_{i=1:n}$ is a set with random variables with size $n$, we imply that there are no ties among values in $S$ according to the definition of a set. As an example, suppose we toss a fair coin for two times, there are 3 possible observations in terms of sets: $\{Head\}$, $\{Head, Tail\}$ and $\{Tail\}$. According to Corollary 2, $\Pr(\{Head, Tail\}) = 1/2$. However, Corollary 2 does not cover the cases with two $Heads$ and two $Tails$.

\subsection{The HMM Formalization}

\subsubsection{Motion model.}
\label{sec:hmm-modelling}
Formally, we define a joint state as a finite set of all objects, $S = \{s_i\}_{i=1:|S|}$. An object is represented as a high-dimensional vector $s = ( x, y, \dot x, \dot y)$, where $(x, y)$ and $(\dot x, \dot y)$ are the position and velocity respectively, both in world frame. We assume each object moves independently following a random-acceleration moving model: $(x, y) \gets (x, y) + (\dot x, \dot y)\tau + {1 \over 2}(\ddot x, \ddot y)\tau^2$ and $(\dot x, \dot y) \gets (\dot x, \dot y) + (\ddot x, \ddot y) \tau$, where $\tau$ is the update time interval, and $(\ddot x, \ddot y)$ is the random acceleration, computed as $(\ddot x, \ddot y) = (p \cos{\theta}, p \sin{\theta})$, where $p \sim \mathcal{N}(0, \sigma_p^2)$ is the \emph{dash} power, and $\theta \sim \mathcal{U}(0, 2\pi)$ is the \emph{dash} direction. Here, $\mathcal{N}$ and $\mathcal{U}$ denote Gaussian and uniform distributions; and, $\sigma_p^2$ is the \emph{dash} power variance. Furthermore, we model the fact that objects may occasionally move into or out from the monitoring filed as a birth-death process with birth rate $\lambda$ and death rate $|S|\mu$ per second.

\subsubsection{Observation model.}
\label{sec:observation-function}
Observations are sequentially provided by a low-level object detector as a set of detections, $O = \{o_i\}_{i=1:|O|}$. We assume a detection $o = (x, y, c)$ includes a position $(x, y)$ in world frame and a confidence value $c \in [0, 1]$. The confidence value reflects the internal classification confidence of the detector, which, for example, may come from margin distances of support vector machines (SVMs) used in the detection algorithm. If the detector can not provide confidence values, we can just use default values. Thus this is actually a general formulation.

Let's first consider the case for a single object and a single detection. Given state $s = (x, y, \dot x, \dot y)$, we denote $\Pr(o \mid s)$ as the probability of observing detection $o = (x', y', c)$, computed as $\Pr(o \mid s) = \Pr(c \mid \boldsymbol 1) \Pr(x', y' | x, y)$, where $\Pr(c \mid \boldsymbol 1)$ is the probability of having confidence $c$ given that there is truly an object, and $\Pr(x', y' \mid x, y)$ is the probability of having a detection in position $(x', y')$ given that the object is in position $(x, y)$. We use a Beta distribution to model $\Pr(c \mid \boldsymbol 1) = Beta(c \mid 2, 1)$, and a Gaussian distribution to model $\Pr(x', y' \mid x, y) = \mathcal{N}(x', y' \mid x, y, \boldsymbol\Sigma)$, where $\boldsymbol\Sigma$ is the covariance.

In the case of false detection, let $\Pr(o \mid \emptyset)$ be the probability of observing $o = (x', y', c)$, computed as $\Pr(o \mid \emptyset) = \Pr(c \mid \boldsymbol 0) f_b(x', y')$, where $\Pr(c \mid \boldsymbol 0)$ is the probability of having confidence $c$ given that there is no object, and $f_b(x', y')$ is a background distribution giving the probability that a false detection is occurring in position $(x', y')$. We use a Beta distribution to model $\Pr(c \mid \boldsymbol 0) = Beta(c \mid 1, 2)$, and a uniform distribution over the monitoring area to model $f_b$.

In general cases, we assume at a single time step, an object can result in at most one detection, and a detection can originate from at most one object. Let $F \subseteq O$ and $M \subseteq S$ be the set of false and missing detections, each possible combination of $F$ and $M$ must satisfy $|O - F| = |S - M|$. Denoted by $O \circ S = \{\langle F_i, M_i \rangle \}_{i=1:|O \circ S|}$ the set of all $F$-$M$ pairs, we have $|O \circ S| = \sum_{0 \leq i \leq \min\{|O|,|S|\}} {|O| \choose i} {|S| \choose i} = {|O| + |S| \choose |O|}$. We assume that false and missing detections are independently following Poisson processes with parameters $\nu$ and $|S|\xi$ per second. Suppose the update time interval is $\tau$, according to Corollaries \ref{corollary:1} and \ref{corollary:2}, the observation function is
\begin{multline}
\label{eq:joint-observation}
\Pr(O \mid S) = \sum_{\langle F, M \rangle \in O \circ S} \Pr(O - F \mid S - M) \cdot (\nu\tau)^{|F|} e^{-\nu\tau} \\
\prod_{o \in F} P (o \mid \emptyset) {(|S|\xi\tau)^{|M|} e^{-|S|\xi\tau} \over |M|!} {1 \over {|S| \choose |M|}},
\end{multline}
where $\Pr(O - F \mid S - M)$ gives the probability of observing the same number of detections given objects. For convenience, we define $f_F(F) = (\nu\tau)^{|F|} e^{-\nu\tau} \prod_{o \in F} P (o \mid \emptyset)$, and $f_M(M) = {(|S|\xi\tau)^{|M|} e^{-|S|\xi\tau} \over |M|!} {1 \over {|S| \choose |M|}}$ hereinafter. Let $\Psi_{S - M}^{O - F}$ be the set of all possible assignments from $S - M$ to $O - F$, assuming conditional independence between observations, we have
\begin{equation}
\label{eq:joint-observation-matched}
\Pr(O - F \mid S - M) = \sum_{\psi \in \Psi_{S-M}^{O-F}} \prod_{s \in S - M} \Pr(\psi(s) \mid s).
\end{equation}

Combining Equations \ref{eq:joint-observation} and \ref{eq:joint-observation-matched}, we have the full observation function, which has $\sum_{0 \leq i \leq \min\{|O|,|S|\}} {|O| \choose i} {|S| \choose i} i! = \Omega( ({\max\{|O|,|S|\} \over e})^{\min\{|O|,|S|\}})$ terms. It is intractable to compute the full expression in real time for even moderate state or observation sizes. Approximations are made in practice.

\subsection{Observation Function Approximation}
\label{sec:data-association}

\subsubsection{Assignment pruning.}
\label{sec:assignment-pruning}
Equation \ref{eq:joint-observation-matched} has $m!$ ($m = |S-M| = |O-F|$) terms in total, which makes it intractable in practice. Basically not all assignments need to be considered, since most of them have relatively very small probabilities compared with the best assignment, particularly for cases when $m > 2$. To this end, we convert probabilities $\Pr(o \mid s)$ to costs $c(s, o) = -\log(\Pr(o \mid s))$, and find the assignments in cost-increasing order by following Murty's algorithm \cite{murty1968algorithm} until the probability ratio of the last assignment to the first assignment is lower than a threshold. In general, optimized Murty's algorithm finds the top-$k$ best assignments of an assignment problem with size $N \times N$ in $O(kN^3)$ time complexity.

\subsubsection{False-missing pruning.}
\label{sec:fm-pruning}
The set of all possible $F$-$M$ pairs has size ${|O| + |S| \choose |O|}$, leading to a huge time complexity when computing the full observation function.  The idea is to find the possible $F$-$M$ pairs $\langle F, M \rangle$ in probability decreasing order until $f_F(F)f_M(M)$ is lower than a threshold with the help of a priority queue. The overall approximated observation function with this pruning strategy is implemented in Algorithm \ref{algo:observation-function},
where a priority queue is used to ensure that $F$-$M$ pairs are evaluated in a probability-decreasing order. The \FuncSty{Murty} function in Algorithm \ref{algo:observation-function} approximates Equation \ref{eq:joint-observation-matched} using assignment pruning strategy as described in the previous section.

\begin{algorithm}[t]\DontPrintSemicolon
    \caption{\FuncSty{ObservationFunction}}
    \label{algo:observation-function}
    \SetKwFunction{Pop}{\textbf{Pop}}
    \SetKwFunction{Murty}{\textbf{Murty}}

    \KwIn{A set of detections $O$, and a set of objects $S$}
    \KwOut{Probability of observing $O$ given $S$}

    Let $Q \gets$ a descending priority queue initially empty \\
    Let $\mathcal{F} \gets$ a list of all possible false detections $F$ \\
    Let $\mathcal{M} \gets$ a list of all possible missing detections $M$ \\
    Sort $\mathcal{F}$ according to $f_F(\cdot)$ in descending order \\
    Sort $\mathcal{M}$ according to $f_M(\cdot)$ in descending order \\
    Add $(1, 1)$ to $Q$ with priority $f_F(\mathcal{F}[1]) f_M(\mathcal{M}[1])$ \\
    Let $p \gets 0$ \\
    \Repeat{$q <$ threshold {\bf or} $Q$ is empty} {
        Let $(i, j) \gets$ \Pop{$Q$} \\
        Let $q \gets f_F(\mathcal{F}[i]) f_M(\mathcal{M}[j])$ \\
        \If {$|\mathcal{F}[i]| = |\mathcal{M}[j]|$} {
            $p \gets p + q$ \Murty{$\mathcal{F}[i], \mathcal{M}[j]$} \\
        }
        \If {$i+1 \le |\mathcal{F}|$} {
            Add $(i+1, j)$ to $Q$ with priority $f_F(\mathcal{F}[i+1]) f_M(\mathcal{M}[j])$ \\
        }
        \If {$j+1 \le |\mathcal{M}|$} {
            Add $(i, j+1)$ to $Q$ with priority $f_F(\mathcal{F}[i]) f_M(\mathcal{M}[j+1])$ \\
        }
    }
    \Return $p$ \\
\end{algorithm}

We show that Algorithm \ref{algo:observation-function} finds the $F$-$M$ pairs in a probability decreasing order. We define $f_{FM}(i, j) = f_F(\mathcal{F}[i]) f_M(\mathcal{M}[j])$ for short. In the $k$th ($1 \le k \le |\mathcal{F}||\mathcal{M}|$) iteration of the loop, let $Q_k$ be the priority queue before popping, and let $(i_k, j_k)$ be the popped element, we have $(i_k, j_k) = \argmax_{(i, j) \in Q_k} f_{FM}(i, j)$, and $Q_{k+1} \cup (i_k, j_k) = Q_k \cup \boldsymbol 1[i_k+1 \le |\mathcal{F}|] (i_k+1, j_k) \cup \boldsymbol 1[j_k+1 \le |\mathcal{M}|] (i_k, j_k+1)$. Since $f_{FM}(i_k+1, j_k) \le f_{FM}(i_k, j_k)$ and $f_{FM}(i_k, j_k+1) \le f_{FM}(i_k, j_k)$, we have $f_{FM}(i_{k+1}, j_{k+1}) \le f_{FM}(i_k, j_k)$ for $1 \le k \le |\mathcal{F}||\mathcal{M}| - 1$. So Algorithm \ref{algo:observation-function} finds the $F$-$M$ pairs in a desired probability decreasing order.

\subsection{Particle Filtering}
We use particle filter to make the inference in the formulated HMM. A particle is defined as a set of object states, $X = \{ s_i \}_{i=1:|X|}$. The posterior distribution over states $\Pr(S_t \mid O_1, O_2, \dots, O_t)$ is approximated as a set of weighted particles $\mathcal{P}_t = \{ \langle X_t^{(i)}, w_t^{(i)} \rangle\}_{i=1:N}$, such that $\sum_{i=1}^N w = 1$. In each step of updating, for particle $X_{t-1} \in \mathcal{P}_{t-1}$, a new particle is proposed from a \emph{proposal distribution}: $\hat{X}_t \sim \pi(\cdot \mid X_{t-1}, O_t)$. The motion, observation and proposal weights are computed as: $m_t = \Pr(\hat{X}_t \mid X_{t-1})$, $o_t = \Pr(O_t \mid  \hat{X}_t)$ and $p_t = \pi(\hat{X}_t \mid X_{t-1}, O_t)$ respectively. The particle weight is then updated as $w_t \gets w_{t-1} {m_t o_t \over p_t}$. Finally, a set of new particles $\mathcal{P}_t$ is generated by normalizing and resampling from $\mathcal{P}_{t-1}$.

\subsubsection{Particle refinement.}
\label{sec:particle-refinement}
In common implementations of particle filters, new particles are usually proposed directly from the motion model, in which case updating particle weights simplifies to $w_t \gets w_{t-1}  o_t$. However this simple proposal strategy does not work well in MOT domains, because for newly appearing objects, the probability that the motion based proposals will match the new detections is extremely small. To overcome this difficulty, a  refinement method is developed to make more informative proposals.

For detection $o = (x', y', c)$, the probability that it is not a false detection is $\Pr(\boldsymbol 1 \mid c) = {\Pr(c \mid \boldsymbol 1) \Pr(\boldsymbol 1) \over \Pr(c \mid \boldsymbol 1) \Pr(\boldsymbol 1) + \Pr(c \mid \boldsymbol 0) \Pr(\boldsymbol 0)} = c$ in our Beta assumptions, by assuming prior probabilities $\Pr(\boldsymbol 1) = \Pr(\boldsymbol 0) = 0.5$. Given that this detection is not a false detection, the probability that it originates from state $s = (x, y, \cdot, \cdot)$ is $\Pr(s \mid o) = \eta \Pr(o \mid s) \Pr(s) = \mathcal{N}(x', y' \mid x, y, \boldsymbol\Sigma)$ in Gaussian assumption, if the prior distribution of $s$ is assumed to be uniform. Therefore, for a new detection $o \in O$, we in principle propose object $s$ distributed as $\Pr(s \mid o)$ with probability $c$. We denote this mixture proposal distribution as $\pi_{s}(\cdot \mid o)$, with $\pi_s(\emptyset \mid o) = 1 - c$ and $\pi_s(s \mid o) = c \mathcal{N}(x', y' \mid x, y, \boldsymbol\Sigma)$, if $o = (x', y', c)$ and $s = (x, y, \cdot, \cdot)$. Notice that the velocity of $s$ is ignored in the proposal distribution. The question is, for each particle $X$, how to determine whether a detection $o \in O$ is new or it originates from an existing object $s \in X$. We find out possible new detections by seeking the most likely data-association between $X$ and $O$.

Formally, a data-association between particle $X$ and observation $O$ is defined as a 3-tuple $\varphi = \langle F, M, \psi \rangle$ where $F \subseteq X$ is the set of false detections, $M \subseteq O$ is the set of missing detections, and $\psi \in \Psi_{X-M}^{O-F}$ is an assignment from $X-M$ to $O-F$. Equation \ref{eq:joint-observation} can then be re-written as $\Pr(O \mid X) = \sum_\varphi \Pr(O, \varphi \mid X)$. Let $\varphi^* = \argmax_\varphi \Pr(O, \varphi \mid X)$ be the optimal data-association in terms of observation likelihood. Suppose $\varphi^* = \langle F^*, M^*, \psi^* \rangle$, then $F^*$ is intuitively the most likely set of new detections given $X$. The resulting proposal distribution $\pi_{r}( \cdot \mid X_{t-1}, O_t)$ is implemented as follows.
\begin{enumerate}
\item Sample $X'_t \sim \Pr(\cdot \mid X_{t-1})$ using only motion model,
\item \label{enum:sample-association} Find best data-association $\varphi^* = \langle F^*, M^*, \psi^* \rangle$ given $X'_t$,
\item Propose new objects $X' = \{ s \mid s \sim \pi_s(\cdot \mid o), o \in F^* \}$,
\item Propose a refined particle $X''_t \gets X'_t \cup X'$,
\item \label{enum:argmax} Return $\hat{X}_{t} \gets \argmax_{ X \in \{  X'_t, X''_t \}} \Pr(O_t \mid X)$.
\end{enumerate}

Note that Step \ref{enum:sample-association} can be easily approximated by running Algorithm \ref{algo:observation-function}, and Step \ref{enum:argmax} is used as an acceptance test to ensure a better proposal in terms of $O$ is returned. Temporary results of running Algorithm \ref{algo:observation-function} are cached and reused in Step \ref{enum:sample-association}, Step \ref{enum:argmax} and further update steps whenever possible. The resulting proposal strategy is very efficient at capturing new objects. It turns out that we do not need to propose any new objects in the motion model. We simply set the object birth rate to 0 in experiments.

\subsubsection{Bayesian density estimation.}
Using the refined proposal distribution, it is necessary to compute motion and proposal weights when updating particles in order to make the updated particles consistent with the underlying Bayesian filtering equation. However in particle filtering framework, we are not able to necessarily have the explicit expression of motion function to compute these weights. A Bayesian density estimation method is proposed to alleviate this difficulty.

For a set of particles $\mathcal{P} = \{X_i\}_{i=1:N}$, let $\mathcal{P}'$ be the set of particles proposed directly from motion model: $\mathcal{P}' = \{ X' \mid X' \sim \Pr( \cdot \mid X), X \in \mathcal{P} \}$, and $\mathcal{P}''$ be the set of refined particles: $\mathcal{P}'' = \{ X'' \mid X'' \sim \pi_r(\cdot \mid X'), X' \in \mathcal{P}' \}$. Following the idea of \cite{biswas2011corrective}, we estimate motion and proposal weights by seeing $\mathcal{P'}$ and $\mathcal{P}''$ as data and building density estimators over them: $\Pr(X'' \mid X) \approx \Pr(X'' \mid \mathcal{P}')$ and $\pi_r(X'' \mid X) \approx \Pr(X'' \mid \mathcal{P}'')$, where $X''$ is the refined proposal from $X$.

Assuming object states are independently distributed as an unknown distribution $f_s$, we have $\Pr(X \mid \mathcal{P}) = n! \Pr(|X| = n \mid \mathcal{P}) \prod_{s \in X} f_s(s \mid \mathcal{P})$, defined over sets according to Corollary \ref{corollary:2}. Furthermore, we assume the number of objects follows a Poisson distribution with unknown parameter $\gamma$. Suppose $\gamma$ is priorly distributed as a Gamma distribution with parameters $(\alpha_0, \beta_0 )$,  then the posterior distribution of $\gamma$ is also Gamma with updated parameters $( \alpha = \alpha_0 + \sum_{X \in \mathcal{P}} |X|, \beta = \beta_0 + N )$ by following the Bayesian method. Hence the posterior predictive of the number of objects is a negative binomial distribution with number of failures $r = \alpha$ and success rate $p = {1 \over 1 + \beta}$. That is to say, $\Pr(|X| = n \mid \mathcal{P}) = \mathcal{NB}(n ; r, p) = {n + r - 1 \choose n} p^n (1-p)^r$. A kernel density estimator (KDE) is further used to approximate $f_s(s \mid \mathcal{P})$. Denoted by $\mathcal{H}(\mathcal{P}) = \{ s \mid s \in X, X \in \mathcal{P} \}$ the set of all objects in $\mathcal{P}$, $f_s$ is approximated as $f_s(s \mid \mathcal{P}) \approx {1 \over |\mathcal{H}(\mathcal{P})|} \sum_{s' \in \mathcal{H}(\mathcal{P})} \phi(x - x') \phi(y - y')$, where $s = (x, y, \cdot, \cdot)$ and $\phi$ is the standard Gaussian function. Notice that velocities ($(\dot x, \dot y)$) are ignored in estimation and coordinates ($x$ and $y$) are assumed to be independent here.

\subsection{Object Identification}
\label{sec:object-identification}

Although a set of updated particles $\mathcal{P}_t$ encodes completely the joint posterior distribution, an object identification process is needed to identify each individual object, which provides more useful information for high-level tasks. For example, two particles may state that the joint state is either $\{s_1, s_2\}$ or $\{s_3, s_4, s_5\}$. Although this is a complete posterior distribution of joint state in terms of particles over sets, it lacks detailed information on the object level, which refers to the existence and actual state for particular objects. An object identification procedure is incorporated to provide such results as $\{object_1, object_2, object_3\}$, where $object_i = (s'_i, confidence)$, where $confidence$ is the probability that this object exists. The link between a state in a particle and an object is treated as a hidden variable. An expectation-maximization (EM) algorithm is developed to make the inference by iteratively proposing hidden variables and updating the joint state. 

 An \emph{identified object} (or \emph{identity} for short) is defined as a 3-tuple $h = (s, c, \rho)$, where $s$ is the expected state, $c \in [0, 1]$ is the confidence value, and $\rho$ is a unique ID number. Identity $h$ is estimated from an associated subset $\mathcal{H}(h) \subseteq \mathcal{H}(\mathcal{P}_t)$ (named \emph{state pool}) as: $s = {1 \over |\mathcal{H}(h)|} \sum_{ s' \in \mathcal{H}(h)} s'$, and $c = { |\mathcal{H}(h)| \over N }$. Let $L_t = \{h_i\}_{i=1:|L_t|}$ be the list of identities at cycle $t$, initially $L_0 = \emptyset$. For each $o \in O_t$, we propose a new identity $h_o$ with $\mathcal{H}(h_o)$ initially empty. Let $L_{O_t} = \{ h_o \mid o \in O_t \}$ be the set of all potential new identities, the set of all candidates to be identified at cycle $t$ is $C_t = L_{t-1} \cup L_{O_t}$. As aforementioned, each identity $h \in C_t$ is associated with a state pool $\mathcal{H}(h)$, which is equivalent to labelling each state $s$ such that $\mathcal{H}(h) = \{ s \mid l(s) = h, s \in \mathcal{H}(\mathcal{P}_t) \}$. Let $f_h$ be the state distribution of identity $h \in C_t$, and $\mathbf{P} = \{ f_h \mid h \in C_t \}$ be the set of all identity distributions, we propose an object identification process to find the best estimation of $\mathbf{P}^*$ that can optimally explain the updated particles, formally $\mathbf{P}^* = \argmax_{\mathbf{P}} \max_l \Pr(\mathcal{P}_t, l \mid \mathbf{P})$.

The EM algorithm seeks the maximum a posteriori (MAP) by iteratively applying the following two steps (in a k-means algorithm way):

{\bf E step}: $l^{(k)} = \argmax_l \Pr(\mathcal{P}_t, l \mid \mathbf{P}^{(k-1)})$,

{\bf M step}: $\mathbf{P}^{(k)} = \argmax_{\mathbf{P}} \Pr(\mathcal{P}_t, l^{(k-1)} \mid \mathbf{P})$.

The E step is equivalent to finding a labelling method for each particle $X \in \mathcal{P}_t$ such that $\prod_{s \in X} f_{l(s)} (s)$ is maximized, given $f_h \in \mathbf{P}^{(k-1)}$, which is then solved by reducing to $N$ number of best assignment sub-problems. The M step is approximated in an MLE way taking account of the current observation $O_t$. Each detection $o \in O_t$ is associated with a subset $\mathcal{H}(o) \subseteq \mathcal{H}(\mathcal{P}_t)$, constructed by selecting the most likely data-association $\langle F^*, M^*, \psi^* \rangle = \varphi^* = \argmax_\varphi \Pr(O_t, \varphi \mid X)$ given particle $X$, and updating $\mathcal{H}(\psi^*(s))$ as: $\mathcal{H}(\psi^*(s)) \gets \mathcal{H}(\psi^*(s)) \cup s$ for all $s \in X$. Notice that cached results in the particle filtering step are reused here. For each $h \in C_t$, $f^{(k)}_h$ is then computed as: $f^{(k)}_h(s) = \sum_{o \in O_t} f_h(s, o) + f_h(s, \emptyset) = \sum_{o \in O_t} \Pr(s \mid o) f_h(o) + f_h(s, \emptyset) = \sum_{o \in O_t} \mathbf{1}[s \in \mathcal{H}(o)] f_h(o) + \mathbf{1}[\forall o : s \notin \mathcal{H}(o)] f_h(\emptyset)$, where $\mathbf{1}$ is the indicator function, $f_h(o)$ is the probability that detection $o$ is generated from identified object $h$, and $f_h(\emptyset)$ is the probability that identified object $h$ does not have a detection. We then approximate $f_h(o) = \Pr(o \mid h) \Pr(h) \approx {1 \over N} |\mathcal{H}(o) \cap \mathcal{H}(h)|$, and  $f_h(\emptyset) = \Pr( \emptyset \mid h) \Pr(h) \approx { 1 \over N } \left| \mathcal{H}(h) - \bigcup_{o \in O_t} \mathcal{H}(o) \cap \mathcal{H}(h) \right| $. Therefore, we have $\mathbf{P}^{(k)} = \{ f^{(k)}_h \mid h \in C_t \}$.

The algorithm runs the M step first with $l^{(0)}$ initialized according to the converged/final labelling from the last cycle, taking account of deletion, addition and repetition of states during the particle filtering step. It then iteratively proposes a sequence of new distributions $\mathbf{P}^{(k)}$ and new labelling $l^{(k+1)}$ until convergence or a maximal number of steps is reached. The final labelling $l$ is used to construct $L_t$ as: $L_t = \{ l(s) \mid s \in \mathcal{H}(\mathcal{P}_t) \}$. Notice that $|L_t| \le |C_t|$, since candidates with finally empty state pools will not be included in $L_t$. Thus the algorithm is able to tell each individual objects with confidence values from updated particles, where each particle represents a set of potential objects.

\section{Experimental Evaluation}
\label{sec:experiments}

\begin{figure}[!t]
\begin{center}
    \begin{subfigure}{.5\linewidth}
        \includegraphics[width=\linewidth]{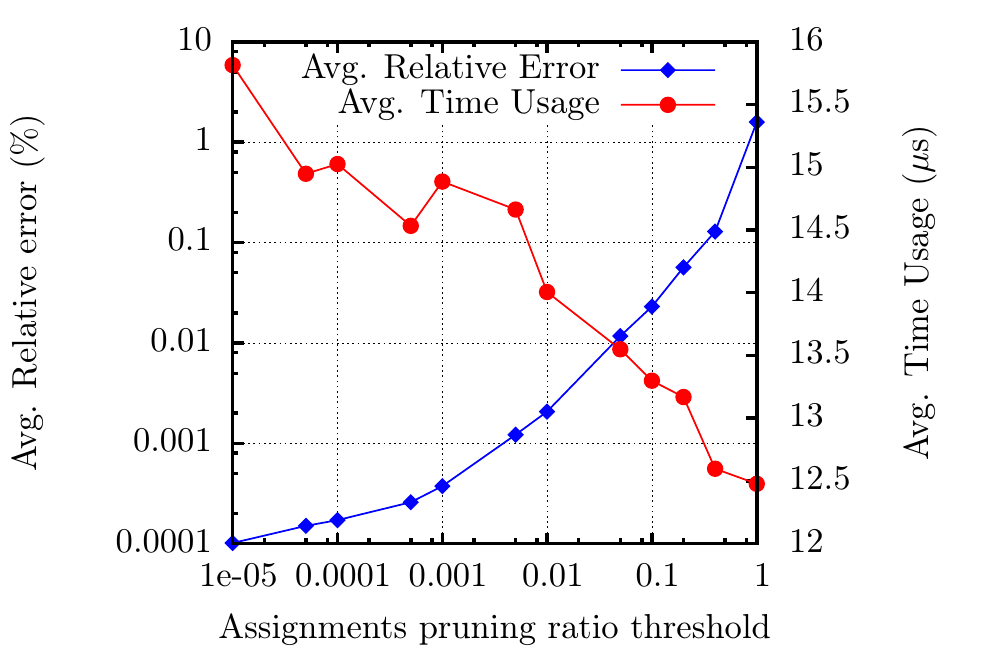}
        \caption{Assignment pruning}
        \label{fig:murty}
    \end{subfigure}%
    \begin{subfigure}{.5\linewidth}
        \includegraphics[width=\linewidth]{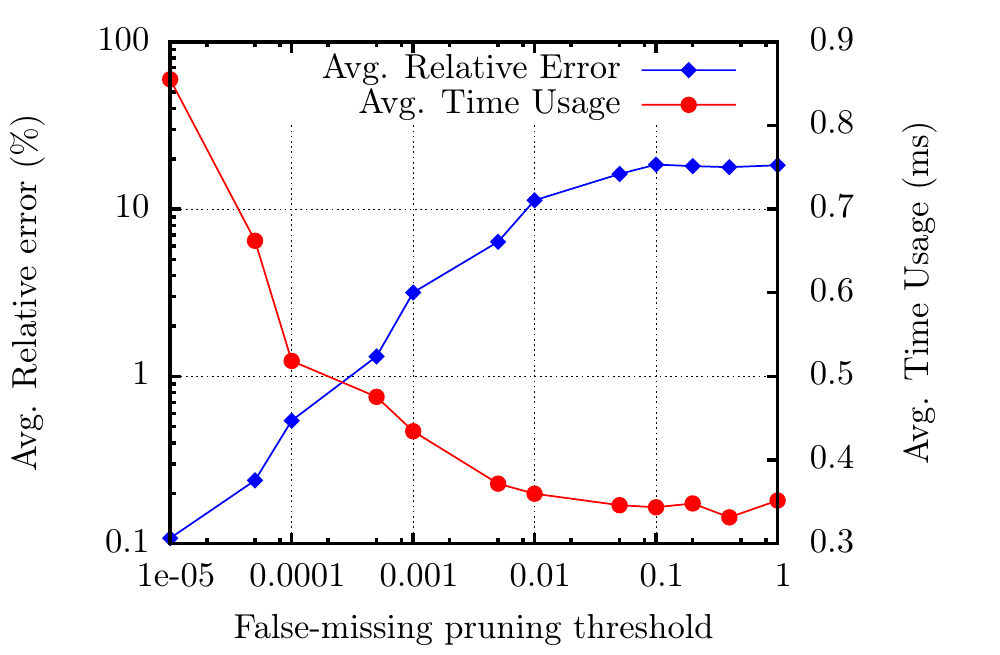}
        \caption{False-missing pruning}
        \label{fig:overall}
    \end{subfigure}
\caption{Pruning approximation error test.}
\end{center}
\end{figure}

\begin{figure*}[!t]
\centering
    \begin{subfigure}{.25\linewidth}
        \includegraphics[width=0.98\linewidth]{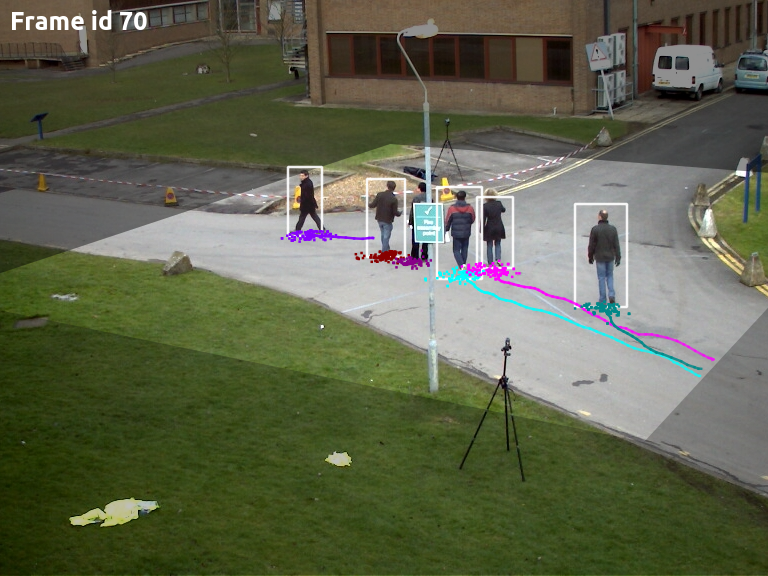}
    \end{subfigure}%
    \begin{subfigure}{.25\linewidth}
        \includegraphics[width=0.98\linewidth]{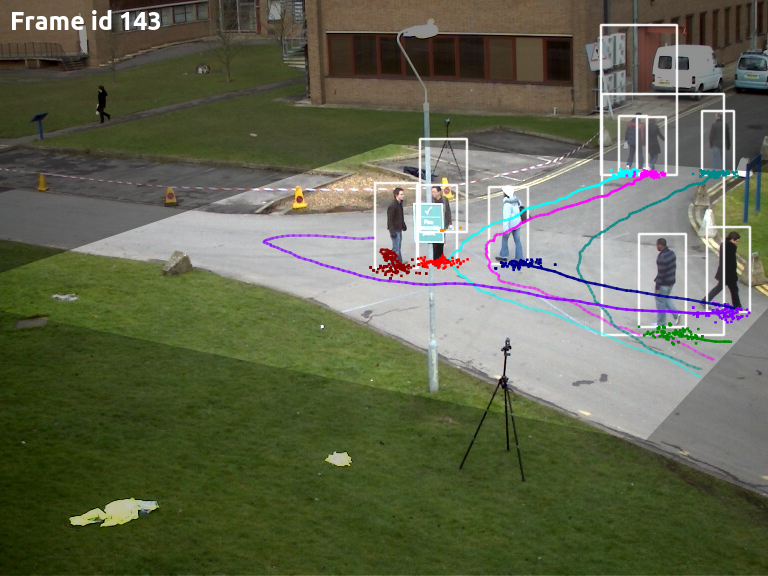}
    \end{subfigure}%
    \begin{subfigure}{.25\linewidth}
        \includegraphics[width=0.98\linewidth]{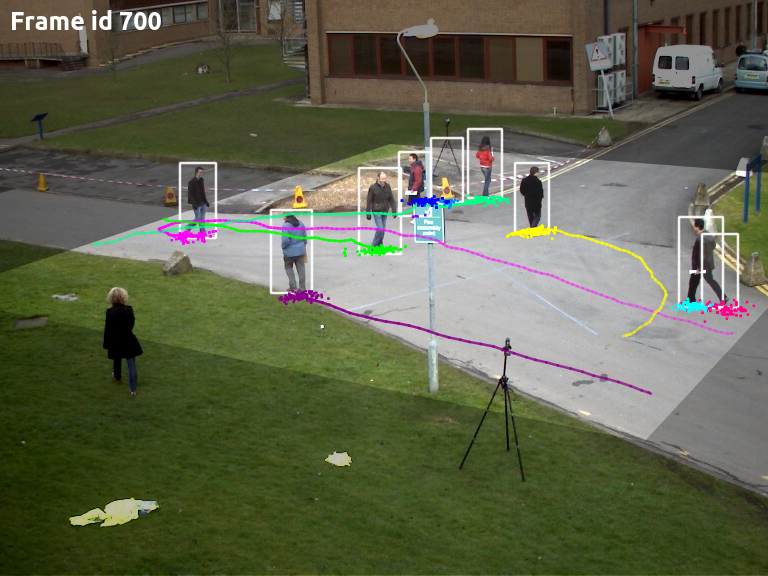}
    \end{subfigure}%
    \begin{subfigure}{.25\linewidth}
        \includegraphics[width=0.98\linewidth]{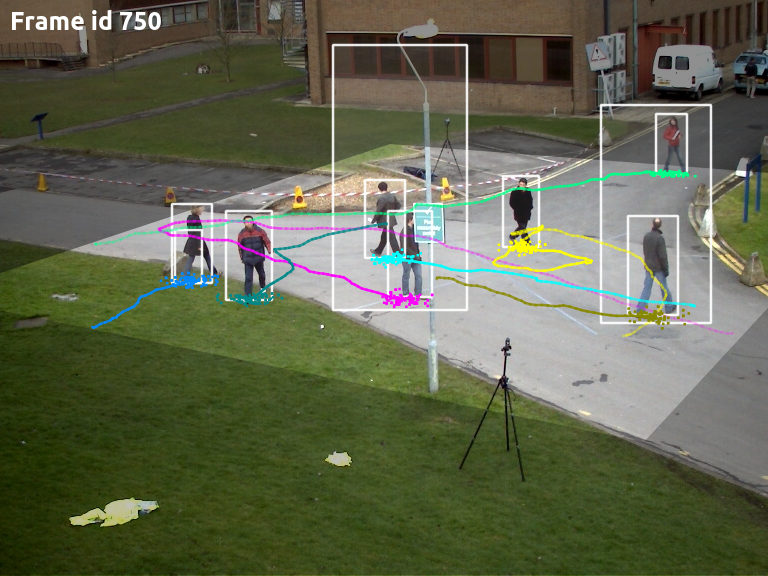}
    \end{subfigure}
\caption{Tracking results of MOTIS in PETS2009 S2L1 dataset. A video showing the whole results is available at \protect\url{http://goo.gl/4QIIey} anonymously.}
\label{fig:benchmark}
\end{figure*}

In the first experiment, we evaluate the approximation error introduced by pruning strategies. We generated a random scenario with a length of 1,000 cycles according to a birth-death process with birth rate $\lambda = 0.06$/s and death rate $\mu = 0.02$/s. When computing Equation \ref{eq:joint-observation} and Equation \ref{eq:joint-observation-matched}, we record the result with and without pruning respectively as $v$ and $v'$, then the relative error is calculated as $|{v - v' \over v}|$. Let $T'$ be the assignment pruning ratio threshold, and $T''$ be the false-missing pruning threshold. To evaluate the assignment pruning, we fixed $T''$ to be 0.001, ran the resulting approach with variable $T'$ over the generated scenario, and reported the average relative error and the average time usage. In the experiment, trivial cases in which the assignment problem was smaller than $2 \times 2$ were not counted. Figure \ref{fig:murty} depicts the results in logarithm form, from which it can be seen that, as $T'$ grows, the average relative error increases near proportionally and the average time usage decreases almost exponentially. However the average relative error stays to be rather small. It is no more than 2\% even if $T'$ is exactly chosen to be 1, in which case only top-2 assignments are calculated. To evaluate the false-missing pruning, we fixed $T'$ to be 0.1, and ran the algorithm with different $T''$ over the generated scenario. The results are shown in Figure \ref{fig:overall}. A similar trend can be observed in the figure. However $T''$ has relatively higher impact on the approximation error than ${T'}$. In the following experiments, $T'$ and $T''$ are chosen to be 0.1 and 0.001 respectively. Detailed results in this case are shown in Table \ref{tbl:pruning}.

\begin{table}[!t]
\centering
\begin{tabular}{l|l|l}
    \toprule
    Before pruning      & Equation \ref{eq:joint-observation-matched}   & Equation \ref{eq:joint-observation} \\
    \midrule
    Avg. terms & $32.66\pm 0.09$ & $1466.52 \pm 34.77$ \\
    Max. terms & $5040$ & $2.5018 \times 10^6$ \\
    \midrule
    After pruning  &     &   \\
    \midrule
    Avg. terms & $2.11 \pm 0.01$  & $29.23 \pm 0.13$ \\
    Max. terms           & $145$  & $3043$ \\
    Pruning rate        & $93.50\%$  & $97.95\%$ \\
    Relative error & $0.026\%$ & $3.30\%$ \\
   \bottomrule
\end{tabular}
\caption{Detailed results of pruning experiments with $T' = 0.1$ and $T'' = 0.001$.}
\label{tbl:pruning}
\end{table}

\begin{table*}[!t]
\centering
\tabcolsep=0.13cm
\begin{tabular}{l|l|r||l|l|r} 
    \toprule
    & Parameter & PETS2009 & & Parameter & PETS2009 \\
    \midrule
    $\lambda$ & Object birth rate (1/s) & 0.0 & $\boldsymbol\Sigma$ & Observation covariance & $0.5\boldsymbol I$ \\
    $\mu$ & Object death rate (1/s) & 0.02 & $\alpha_0$ & Initial Gamma $\alpha$ parameter & 2.0\\
    $\sigma_p$ & Dash power deviation (m$^2$/s) & 1.0 & $\beta_0$ & Initial Gamma $\beta$ parameter & 1.0\\
    $\nu$ & False detection rate (1/s) & 6.0 & $A'$ & Min. area of bounding box (m$^2$) & 0.5\\
    $\xi$ & Missing detection rate (1/s) & 2.0 & $A''$ & Max. area of bounding box (m$^2$) & 2.5\\
    $\tau$ & Update time interval (s) & 0.14 & $R$ & Min. conf. of reported identities & 0.4\\
    $T'$ & Assignment pruning threshold & 0.1 & $N$ & Number of total particles & 128\\
    $T''$ & False-missing pruning threshold & 0.001 & $H$ & Max. number of EM steps & 10\\
    \bottomrule
\end{tabular}
\caption{Parameters used in evaluation of MOTIS.}
\label{tbl:parameters}
\end{table*}

\begin{table}[!t]
\centering
\small
\begin{threeparttable}
\tabcolsep=0.14cm
\begin{tabular} {l|r|r|r|r|r} 
    \toprule
    Algorithm & MOTA & MOTP & IDS & MT & FM \\
    \midrule
    MOTIS$^1$ ({\bf proposed}) & \textbf{93.1\%} & 76.1\% & \textbf{3.6} & 18.0 & 16.0 \\
    MOTIS$^1$ $^2$ ({\bf proposed}) & 90.6\% & 74.5\% & 4.8 & 17.6 & 20.4 \\
    Milan et. al. \cite{Milan:2014:EMM} & 90.6\% & \textbf{80.2\%} & 11 & \textbf{21} & \textbf{6} \\
    Milan et al. \cite{milan2013detection} & 90.3\% & 74.3\% & 22 & 18 & 15 \\
    Segal et. al. \cite{segal2013latent} & 92\%  & 75\% & 4 & 18 & 18 \\
    Segal$^2$ et. al. \cite{segal2013latent} &  90\%  & 75\% & 6 & 17 & 21 \\
    Zamir et al.$^2$ \cite{zamir2012gmcp} &  90.3\% & 69.0\% & 8 & - & - \\
    Andriyenko et al. \cite{andriyenko2011multi} & 81.4\% & 76.1\% & 15 & 19 & 21 \\
    Breitenstein$^2$ et al. \cite{breitenstein2011online} & 56.3\% & 79.7\% & - & - & - \\
    \bottomrule
\end{tabular}
\begin{tablenotes}
\small
\item $^1$averaged over 16 runs.
\item $^2$evaluated within tracking region not cropped.
\end{tablenotes}
\caption{Quantitative results in PETS2009 S2L1 dataset.}
\label{tbl:PETS}
\end{threeparttable}
\end{table}

In the second experiment, we evaluate MOTIS in the S2L1 sequence of the challenging PETS2009 dataset \cite{ferryman2009pets2009}, to compare with existing MOT algorithms. The video is filmed with $\approx 7$ fps from a high viewpoint. It contains 795 frames, showing up to 8 ground truth humans and 13 raw detections. We mainly evaluate our algorithm in the cropped region, which covers approximately an area of $19.0 \times 15.8$m$^2$, as in \cite{andriyenko2011multi}, while also report the results in the whole area. The cropped data has at most 11 detections, and on average 5.67 detections per frame. Each detection consists a confidence value, and a bounding box with center point, height and width information in image frame. The data has a camera calibration file, so it is possible to transform raw detections into world frame. \footnote{The full dataset collected by Milan (2014) is available at http://www.milanton.de/data.html publicly.} In the experiment, we treat detections with extremely large/small bounding boxes as having confidence 0. Only identified humans with identification confidence higher than 0.4 are reported for evaluation. Table \ref{tbl:parameters} outlines the used parameters.

We evaluate the performance of MOTIS in terms of the CLEAR MOT metrics \cite{keni2008evaluating}. The distance threshold used for evaluation is 1.0m, which is widely used in literature. The Multiple Object Tracking Accuracy (MOTA) takes into account false positives, false negatives and identity switches. The Multiple Object Tracking Precision (MOTP) is simply the average distance $d$ in meters between true and estimated objects, normalized to a percentage as $100 \times (1 - d) \%$. In more detail, let $n_t$ be the number of correct matches between ground truth and tracking results found at cycle $t$ by solving a constrained assignment problem (under distance threshold 1.0m) following CLEAR MOT, $d_t^{(i)}$ be the distance between ground truth object $s_t^{(i)}$ and its corresponding tracked identity $h_t^{(i)}$, MOTP is defined as
\begin{equation}
\text{MOTP} = 1 - {\sum_t \sum_{1 \le i \le n_t} d_t^{(i)} \over \sum_t n_t }.
\end{equation} Let $g_t$ be the ground truth number of objects, $a_t$ be the number of tracked identities, and $m_t$ be the number of mismatches (i.e. identity switches) in the mapping, MOTA is defined as
\begin{equation}
\text{MOTA} = 1 - { \sum_t {\left( g_t + a_t - 2n_t + m_t \right)} \over \sum_t g_t}.
\end{equation}
It can be seen that if averaged distance between each ground truth object and its tracked identity is zero, then MOTP equals 100\%; if the number of ground truth objects equals the number of matches, the number of tracked objects equals the number of matches, and there is no ID-switch errors, then MOTA equals 100\%. MOTA and MOTP show the ability of the tracker in terms of estimating human positions and intentions under the consideration of confidence, and WMTA indicates the performance at tracking and keeping their trajectories.

Furthermore, we also report the metrics proposed in \cite{li:learning}, which counts the number of mostly tracked (MT) trajectories, track fragmentations (FM) and identity switches (IDS). An object is mostly tracked when at least 80\% of its ground truth trajectory is found. Track fragmentations count how many times a ground truth trajectory changes its status from ``tracked" to ``not tracked".
Table \ref{tbl:PETS} presents the experimental results, and Figure \ref{fig:benchmark} shows some tracking examples. In the figures, white bounding boxes are the raw detections; trajectories, and current states in particles are depicted with different colors indicating different identified humans.

In comparison, Breitenstein et. al. \cite{breitenstein2011online} track each object separately with greedy data-association via particle filtering, which can be seen as a good baseline for our method. Segal et. al. \cite{segal2013latent} model MOT as a switch linear dynamical system and take advantage of a trained pedestrian and outlier detector in the object domain. Zamir et. al. \cite{zamir2012gmcp} utilize generalized minimum clique graphs to solve the data-association problem by incorporating both motion and appearance information. Andriyenko et. al. \cite{andriyenko2011multi}, Milan et. al. \cite{Milan:2014:EMM} and Milan et. al. \cite{milan2013detection} formulate MOT as an offline optimization problem over splines given an energy function, and initial tracks obtained from per-object extended Kalman filters given greedy data-associations. It can be seen from the results that, our algorithm outperforms the state-of-the-art, being able to run online in a Bayesian recursive way without using any data dependent information (such as object appearance).

\section{Conclusion}
\label{sec:conclusion}
We present a novel multi-object tracking and identification over sets (MOTIS) approach to the multi-object tracking problem. From a multi-object tracking point of view, our approach avoids directly performing observation-to-object association by using a set formulation, and inferring the posterior distribution, as well as the data-association, in joint space. The overall method outperforms the state-of-the-art in the challenging PETS2009 dataset
in terms of overall tracking accuracy and ID-switch errors. In future work, we plan to apply MOTIS in more realistic domains and conduct more systematic theoretical and experimental evaluations.

\appendix

\section*{Acknowledgements}
\bibliographystyle{abbrv}

\end{document}